\numberwithin{equation}{section}
\numberwithin{figure}{section}
\numberwithin{table}{section}
\renewcommand{\p@subfigure}{\thefigure}
\newtheorem{theorem}{Theorem}[section]
\newtheorem{proposition}[theorem]{Proposition}
\newtheorem{observation}[theorem]{Observation}
\newcommand{\repeatable}[2]{\makeatletter \global\expandafter\def\csname repText@#1\endcsname {#2} \makeatother #2}
\newcommand{\repeatxt}[1]{\makeatletter \expandafter\csname repText@#1\endcsname \makeatother}
\newcommand{\usecrop}[2]
{
	\newlength{\cropwidth}
	\setlength{\cropwidth}{\the\textwidth}
	\addtolength{\cropwidth}{#1}
	\newlength{\cropheight}
	\setlength{\cropheight}{\the\textheight}
	\addtolength{\cropheight}{#2}
	\usepackage[width=\the\cropwidth,height=\the\cropheight,center]{crop}
}
\DeclareMathAlphabet{\mathpzc}{OT1}{pzc}{m}{it}
\newcommand{\abs}[1]{\left | #1 \right |}
\newcommand{\norm}[1]{\left \| #1 \right \|}
\newcommand{\norminline}[1]{\| #1 \|}
\newcommand{\Rn}[1]{\mathbb{R}^{#1}}
\newtheorem{thm}{Theorem}[section]
\newtheorem{lem}[thm]{Lemma}
\newtheorem{defa}[thm]{Definition}
\title{PCA-Based Out-of-Sample Extension for Dimensionality Reduction}
\author{Yariv Aizenbud \and Amit Bermanis \and Amir Averbuch}
\begin{document}

\maketitle

\begin{abstract}
Dimensionality reduction methods are very common in the field of
high dimensional data analysis. Typically, algorithms for
dimensionality reduction are computationally expensive. Therefore,
their applications for the analysis of  massive amounts of data are
impractical. For example, repeated computations due to accumulated
data are computationally prohibitive. In this paper, an
out-of-sample extension scheme, which is used as a complementary
method for dimensionality reduction, is presented.  We describe an
algorithm which performs an out-of-sample extension to newly-arrived
data points. Unlike other extension algorithms such as Nystr\"om
algorithm, the proposed algorithm uses the intrinsic geometry of the
data and properties for dimensionality reduction map. We prove that
the error of the proposed algorithm is bounded. Additionally to the
out-of-sample extension, the algorithm provides a degree of the
abnormality of any newly-arrived data point.
\end{abstract}

\section{Introduction}

Analysis of large amounts of high-dimensional big data is of great
interest since it illuminates the underlying phenomena. To cope with
high-dimensional big data, it is sometimes assumed that there are
some (unobservable) dependencies between the parameters of the
multidimensional data points. Mathematically, it means that the data
is sampled from a low-dimensional manifold that is embedded in a
high dimensional ambient space. Dimensionality reduction methods,
which rely on the presence of a manifold, map the data into a
low-dimensional space while preserving certain qualities of the
low-dimensional structures of the data.

A broad class of dimensionality reduction methods are based on
kernel-based methods. The kernel encapsulates a measure of mutual
affinities (or similarities) between data points. Particularly, if
the kernel is semi-positive definite, it can be considered as Gram
matrix of inner products, which correspond to an implicit mapping of
the data to a high dimensional space, typically refereed to as the
feature space. Depending on the chosen kernel, the new geometry of
the data in feature space, represents important features of the
data.

Kernel-PCA is a technique that generalizes the well known principal
component analysis (PCA)~\cite{jolliffe:PCA, hotelling:PCA}. While
the latter detects principal directions of data in Euclidean space
and then  projects the data onto them, the former does the same in
the feature space. It is resulted in a low dimensional Euclidean
representation (embedding) of the data that approximates the feature
space geometry. The dimensionality of the embedding space is
affected by the decay rate of the kernel's spectrum. Examples of
kernel methods are diffusion maps (DM)~\cite{coifman:DM}, local
linear embedding (LLE)~\cite{roweis:LLE}, Laplacian
eigenmaps~\cite{belkin:LaplacianEigenmaps}, Hessian
eigenmaps~\cite{donoho:HessianEigenmaps} and local tangent space
alignment~\cite{yang:LTSA,zhang:LTSA}.

From a practical point of view, kernel methods have a significant
computational drawback: spectral analysis of the kernel matrix
becomes impractical for large datasets due to high computational
complexity required to manipulate a kernel matrix. Their global
nature is also disadvantageous. Furthermore, in many applications,
the analysis process is dynamic due to data accumulation over time
and, as a result, the embedding has to be modified once in a while.
Processing a kernel matrix  in memory becomes
impractical for large datasets due to their sizes.

A general solution scheme embeds a subset of the source data that is
usually referred to as a training dataset. Then, the embedding is
extended to any out-of-sample data point. The Nystr\"om
method~\cite{delves, baker, num_rec}, which is widely used in
integral equations solvers, has become very popular as an
out-of-sample extension method associated with  dimensionality
reduction methodology. For a review of spectral clustering and
Nystr\"om extension see Section $2$ in~\cite{multi_sample}. The Nystr\"om extension scheme has three significant disadvantages: (a) It requires diagonalization of a matrix that costs $O(n^{3})$  operations \cite{golub}.
(b) It requires working with a matrix which may be ill-conditioned due to fast decay of its spectrum, and (c) it is unclear how to choose the length parameter $\epsilon$ since the output is sensitive to the choice of $\epsilon$.
Some limitations of Nystr\"om extension are overcome in \cite{ bermanis:multiSampExtACHA}.
Geometric Harmonics (GH)~\cite{coifman:GH} is another out-of-sample
extension method. It uses the Nystr\"om extension of eigenfunctions
of a kernel defined on the data. In order to avoid numerical
instabilities, it uses only the significant spectral components. In
that sense, the GH framework filters out high frequencies, which are
determined by the kernel, rather than by the data. This problem,
additionally to the fixed interpolation distance problem, is treated
in~\cite{laplacian_pyramids}, where a multiscale interpolation
scheme is introduced. Another multiscale approach, which aims to
solve the aforementioned limitations, was recently introduced
in~\cite{bermanis:multiSampExtACHA}. Both methods project the
objective function on the eigencomponents of a series of kernels,
which cover the complete spectrum of that function. The difference
between these methods is in the extraction of principal components
while the former is spectral and  the latter is interpolative.

All these methods use a kernel matrix (or, perhaps, its low rank approximation) as an interpolation matrix. This mechanism is strongly related to a variety of isotropic interpolation methods that employ radial basis functions (RBF). Such methods are used for scattered data approximation, where the data lies in a metric space. More details about RBF and scattered data approximation can be found in~\cite{RBF} and~\cite{scatter},
respectively.

In this paper, we employ the manifold assumption to establish an
anisotropic out-of-sample extension. We suggest a new anisotropic
interpolation scheme that ascribes for each data point a likelihood
neighborhood. This likelihood is based on geometric features from
the dimensionality reduction map by using PCA of the map\rq{}s
image. Incorporation of such neighborhood information produces a
linear system for finding the out-of-sample extension for this data
point. This method also provides an abnormality measure for a
newly-mapped data point.

The paper has the following structure: Section \ref{sec:setup}
introduces the problem and the needed definitions. The construction
of the out-of-sample extension is described in section
\ref{sec:alg_const}.
 Section~\ref{sec:feature based variance} establishes the geometric-based stochastic linear system
 on which the interpolant is based. Three different interpolants are presented where each is based on different geometric considerations.
  In Section \ref{sec:bound}, an analysis of interpolation\rq{}s error is presented for the case of Lipschitz mappings.
  Computational complexity analysis of the scheme is presented in Section~\ref{sec:complexity}. Experimental results for both synthetic data and real-life data are presented in Section~\ref{sec:examples}.


\section{Problem Setup}
\label{sec:setup}

Let $\mathcal{M}$ be a compact low-dimensional manifold of intrinsic
dimension $m$ that lies in a high-dimensional ambient space $\Rn{n}$
($m<n$), whose Euclidean metric is denoted by $\norm{\cdot}$. Let
$\psi$ be a smooth, Lipschitz  and dimensionality reducing function
defined on $\mathcal{M}$, i.e. $\psi:\mathcal{M} \to
\mathcal{N}\subset \Rn{d}$ ($m<d< n$), where $\mathcal{N}$ is a
$m$-dimensional manifold. Let $M=\{x_1,\ldots,x_p\}\subset
\mathcal{M}$ be a finite training dataset, sufficiently dense
sampled from $\mathcal{M}$, whose image
$\psi(M)=\{\psi(x_1),\ldots,\psi(x_p)\}$ under $\psi$ was already
computed. Given an out-of-sample data point
$x\in\mathcal{M}\backslash M$, we aim to embed it into $\Rn{d}$
while preserving some local properties of $\psi$. The embedding of
$x$ into $\Rn{d}$ is denoted by $\hat{\psi}(x)$. It is referred to
as the extension of $\psi$ to $x$.

The proposed extension scheme is based on the local geometric
properties of $\psi$ in the neighborhood of $x$, denoted by $N(x)$.
Specifically, the influence of a neighbor $x_j\in N(x)$ on the value
of $\hat{\psi}(x)$ depends on its distance from $x$ and the geometry
of the image $\psi(N(x))$ of $N(x)$ under $\psi$. This approach is
reflected by considering $\hat{\psi}(x)$ as a random variable with
mean $\mathbb{E}\hat{\psi}(x)=\psi(x_j)$ and a variance
$\mathbb{V}\hat{\psi}(x)$ that depends on both the distance of $x$
from $x_j$ and on some geometric properties of $\psi(N(x))$ that
will be detailedly discussed in Section~\ref{sec:tangent space}.
Mathematically,
\begin{equation}\label{eq:basic_rv}
\hat{\psi}(x) = \psi(x_j)+\omega_j,
\end{equation}
where $\omega_j$ is a random variable with mean
$\mathbb{E}\omega_j=0$ and variance $\mathbb{V}\omega_j=\sigma_j$
that, as previously mentioned,  depends on the local geometry of
$\psi$ in the neighborhood $N(x)$ of $x$. Thus, we get $\abs{N(x)}$
equations for evaluating  $\hat{\psi}(x)$, one for each neighbor
$x_j\in N(x)$. The optimal solution then, is achieved by the
generalized least squares approach described in
Section~\ref{sec:GLS}.

\subsection{Generalized Least Squares (GLS)}\label{sec:GLS}
In this section, we briefly describe the GLS approach that will be
utilized to evaluate $\hat{\psi}(x)$.  In general, the GLS addresses
the problem of a linear regression that assumes neither independence
nor common variance between the random variables. Thus, if
$y=(y_1,\ldots,y_k)^T$ are random variables that correspond to $k$
data points in $\mathbbm{R}^d$, the addressed regression problem is
\begin{equation}\label{eq:regression}
y=X\beta+\mu,
\end{equation}
where $X$ is an $k\times d$ matrix that stores the data points as
its rows and $\mu\in\Rn{k}$ is an error vector. Respectively to the
aforementioned assumption, the $k\times k$ conditional covariance
matrix of the error term $W=\mathbb{V}\{\mu|X\}$ is not necessarily
scalar or diagonal. The GLS solution to Eq.~\ref{eq:regression} is
\begin{equation}\label{eq:gls_solution}
\hat{\beta}= (X^TW^{-1}X)^{-1}X^TW^{-1}y.
\end{equation}
The Mahalanobis distance between two random vectors $v_1$ and $v_2$ of the same distribution with conditional covariance matrix $W$ is
\begin{equation}\label{eq:Mahal_dist}
\norm{v_1-v_2}_W\triangleq \sqrt{(v_1-v_2)^T W^{-1}(v_1-v_2)}.
\end{equation}
\begin{observation}\label{obs:mahlanobis}
The Mahalanobis distance in Eq.~\ref{eq:Mahal_dist} measures the
similarity between $v_1$ and $v_2$ in respect to $W$. If the random
variables are independent, then $W$ is diagonal. Then, it is more
affected by low variance random variables and less by high variance
variables.
\end{observation}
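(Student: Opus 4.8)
The plan is to unwind Definition~\ref{eq:Mahal_dist} into an explicit coordinate-wise form and read off the three assertions from it. First I would note that $W=\mathbb{V}\{\mu\mid X\}$ is symmetric and positive semidefinite by construction; assuming the error term is non-degenerate in every direction (so that $W$ is in fact positive definite and $W^{-1}$ exists), the quadratic form $(v_1-v_2)^{T}W^{-1}(v_1-v_2)$ is nonnegative and vanishes precisely when $v_1=v_2$. This is exactly the sense in which $\norm{v_1-v_2}_W$ quantifies the (dis)similarity of $v_1$ and $v_2$ relative to $W$, and it reduces to the ordinary Euclidean distance in the special case $W=I$.

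Next I would specialize to the independent case. If the coordinates of $\mu$ are (pairwise) independent, hence uncorrelated, then every off-diagonal entry of $W$ vanishes, so $W=\mathrm{diag}(\sigma_1,\dots,\sigma_d)$ with $\sigma_i=\mathbb{V}\mu_i>0$, and therefore $W^{-1}=\mathrm{diag}(\sigma_1^{-1},\dots,\sigma_d^{-1})$. Substituting this into Eq.~\ref{eq:Mahal_dist} gives
\begin{equation*}
\norm{v_1-v_2}_W^{2}\;=\;\sum_{i=1}^{d}\frac{\bigl((v_1)_i-(v_2)_i\bigr)^{2}}{\sigma_i},
\end{equation*}
a weighted Euclidean distance in which the $i$-th squared coordinate difference is scaled by the weight $1/\sigma_i$.

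The last assertion is then immediate from this identity: a low-variance coordinate ($\sigma_i$ small) is multiplied by a large weight $1/\sigma_i$ and hence influences the distance more, whereas a high-variance coordinate is damped by a small weight and influences it less. I do not expect any genuine obstacle here; the only point that deserves care is the existence of $W^{-1}$, which should either be added as a non-degeneracy hypothesis on $\mu$ or circumvented by replacing $W^{-1}$ with the Moore--Penrose pseudoinverse and restricting attention to $\mathrm{range}(W)$ — after that, the whole observation is a one-line computation.
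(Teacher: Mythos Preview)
Your argument is correct and is essentially the only reasonable way to justify the observation: write $W=\mathrm{diag}(\sigma_1,\dots,\sigma_d)$ in the independent case, invert it, and read off the coordinate weights $1/\sigma_i$ in the resulting weighted Euclidean sum. The paper itself offers no proof at all --- the statement is recorded as an \emph{Observation} and is left as a self-evident remark about the Mahalanobis distance --- so your write-up is strictly more detailed than what the authors provide. Your caveat about the invertibility of $W$ is well taken but is implicitly assumed throughout the paper (cf.\ Eq.~\ref{eq:gls_solution} and Eq.~\ref{eq:psi_hat_solution}), so no pseudoinverse detour is needed here.
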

The GLS solution from Eq.~\ref{eq:gls_solution}, minimizes the squared Mahalanobis distance between $y$ and the estimator $X\beta$, i.e.
\begin{equation}\label{eq: mahalanobis}
\hat{\beta}=\arg\min_{\beta\in\Rn{d}} \norm{y-X\beta}_W.
\end{equation}
Further details concerning GLS can for example  be found
in~\cite{gls}.

In our case, for a fixed out-of-sample data point
$x\in\mathcal{M}\backslash M$ with its $k=\abs{N(x)}$ neighbors, a
linear system of $k$ equations, each of the form of
Eq.~\ref{eq:basic_rv}, is solved for $\hat{\psi}(x)$. Without loss
of generality, we assume that $N(x)=\{x_1,\ldots,x_k\}$. The matrix
formulation of such a system is
\begin{equation}\label{eq:system_rv}
J\hat{\psi}(x) = \Psi+\Omega,
\end{equation}
where $J=[I_d,\ldots,I_d]^T$ is the $kd\times d$ identity blocks
matrix, $\Omega$ is a $kd$-long vector, whose $j$-th section is the
$d$-long constant vector $(\omega_j,\ldots,\omega_j)^T$ and $\Psi$
is a $kd$-long vector, whose $j$-th section is the $d$-long vector
$\psi(x_j)$. The vector $\Psi$ encapsulates the images of $N(x)$
under $\psi$ such as the neighborhood of $\hat{\psi}(x)$ in
$\mathcal{N}$. The corresponding covariance matrix is the $kd\times
kd$ blocks diagonal matrix $W$,
\begin{equation}\label{eq:W}
W = diag(w_1,\ldots,w_k),
\end{equation}
whose $j$-th diagonal element is $w_j=\sigma_j^2I_d$. Therefore, due to Eq.~\ref{eq:gls_solution}, the GLS solution to Eq.~\ref{eq:system_rv} is
\begin{equation}\label{eq:psi_hat_solution}
\hat{\psi}(x)\triangleq (J^TW^{-1}J)^{-1}J^TW^{-1}\Psi,
\end{equation}
and it minimizes the  Mahalanobis distance
\begin{equation}\label{eq:maha measure}
m(x)\triangleq\norminline{J\hat{\psi}(x)-\Psi}_W
\end{equation}
that measures the similarity (with respect to $W$) between
$\hat{\psi}(x)$ and its neighbors $\{\psi(x_1),\ldots,\psi(x_k)\}$
in $\mathcal{N}$, which are encapsulated in $\Psi$. Once $W$ is
defined as an invertible covariance matrix $\hat{\psi}(x)$, as
defined in Eq.~\ref{eq:psi_hat_solution}, is well posed. The
definition of $W$ depends on the definition of $w_j$ for any
$j=1,\ldots,k$, which can be chosen subjected to the similarity
properties to be preserved by $\psi$. These properties are discussed
in Section~\ref{sec:feature based variance}. Once
Eq.~\ref{eq:psi_hat_solution} is solved for $\hat{\psi}(x)$, the
Mahalanobis distance from Eq.~\ref{eq:maha measure} provides a
measure for the disagreement between the out-of-sample extension of
$\psi$ and $x$ with the surrounding geometry. Thus, a large value of
$m(x)$ (Eq. \ref{eq:maha measure}) indicates that $x$ resides
outside of $\mathcal{M}$ and thus, in data analysis terminology, it
can be considered as an anomalous data point.

\section {Construction of the out-of-sample extension}
\label{sec:alg_const} As mentioned in Section~\ref{sec:GLS}, the GLS
solution minimizes the Mahalanobis distances between $\hat{\psi}(x)$
and its neighbors according to the stored information in $W$. Thus,
if the variances are determined subjected to some feature, then
$\hat{\psi}(x)$, which is defined in Eq.~\ref{eq:psi_hat_solution},
is the closest point in $\mathcal{N}$ to its neighbors with respect
to this feature.

The idea of Algorithm \ref{alg:pbe} is to get  a linear
approximation for the local geometry of $\psi$ and then device an
out-of-sample extension that best preserves that linear demand using
GLS. The GLS solution also provides the error, which, as described
in section \ref{sec:GLS}, is considered as an anomalous score.

\begin{algorithm}[H]
    \caption{PCA-Based Out-Of-Sample Extension}
    \label{alg:pbe}
    \textbf{Input:} $M =\{x_1,\ldots, x_n\} \in \Rn{m}$ - training dataset.\\
    $Y=\{y_1,\ldots, y_n\} = \{\psi(x_1),\ldots, \psi(x_n)\} $ - training dataset after dimensionality reduction.\\
    $x$ - an out of sample data point.\\

    \textbf{Output:} $y$ - an out-of-sample extension of the data point $x$ that preserves the local properties of $\psi$.\\
    $err$ - an abnormality score of the data point $x$.
    \begin{algorithmic}[1]
        \STATE Find a set $N_\varepsilon(x)$ (Eq. \ref{neighborhood_epsilon}) of the nearest neighbors with radius $\varepsilon$ to the data point $x$ in $M$.
        \STATE For each data point $x_i \in N_\varepsilon(x)$, construct a weighted linear system
        $ W\psi(x_i) = y$ for $y$
        where the construction of $W$ is described in section \ref{sec:feature based variance}.
        \STATE $y$ is the optimal solution for the combined weighted linear system, as described in section \ref{sec:GLS}.
        \STATE When GLS is solved, find the residual $err$ of the solution.
    \end{algorithmic}
\end{algorithm}


\section{Geometric-based covariance matrix}\label{sec:feature based variance}
In this section we present a construction of $W$, which is  the
weight of the linear system for $y$, such that the resulted
out-of-sample extension $\hat{\psi}(x)$ agrees with principal
direction of its neighborhood in $\mathcal{N}$. The neighborhood
$N_\varepsilon(x)$ can be defined variously. In this paper, we use
the definition
\begin{equation}\label{neighborhood_epsilon}
N_\varepsilon(x)\triangleq\{y\in M : \norm{x-y}\leq\varepsilon\},
\end{equation}
for some positive $\varepsilon$, which ensures  locality of the
extension scheme. The parameter $\varepsilon$ should be fixed
according to the sampling density of $\mathcal{M}$ such that
$\abs{N_{\varepsilon}(x)}\geq d$. This restriction enables to detect the principal
directions of the image of $\psi(N_\varepsilon(x))$ in
$\mathcal{N}$.

In the rest of this section, we present the construction of $W$. The
first construction, presented in Section~\ref{sec:basic} provides  a
 mechanism to control the rate of influence of a data point $x_j\in
N_\varepsilon(x)$ on the value of $\hat{\psi}(x)$ as a function of
its distance from $x$. The second construction for $W$, presented in
Section~\ref{sec:tangent space}, incorporates information regarding
principal variance directions of $N_\varepsilon(x)$ such that the
resulted out-of-sample extension $\hat{\psi}(x)$
\lq\lq{}agrees\rq\rq{} with these directions.

\subsection{Distance based covariance matrix $W$}\label{sec:basic}
Although the definition of $N_{\varepsilon}(x)$ provides locality
for the scheme computation, it is reasonable to require that data
points in $N_{\varepsilon}(x)$, which are distant from $x$, affect
less than close data points. For this purpose, an \lq\lq{}affection
weight\rq\rq{}
\begin{equation}\label{eq:weights}
\lambda_j\triangleq\frac{1}{\norm{x-x_j}}
\end{equation}
 is assigned to each data point $x_j\in N_{\varepsilon}(x)$. Of course, any other decreasing function of the distance between $x$ and $x_j$ can be utilized.
 By defining the variance $\sigma_j$ to be proportional to the distance $\norm{x-x_j}$ such that $\sigma_j\triangleq\lambda_j^{-1}$, then we get a diagonal matrix $W$, whose $j$-th diagonal element is
\begin{equation}\label{eq:simple w}
w_j\triangleq\lambda_j^2I_d.
\end{equation}
Thus, due to Observation~\ref{obs:mahlanobis}, close data points in
$N_\varepsilon(x)$ affect $\hat{\psi}(x)$ more than data points that
are far away.

\subsection{Tangential space based covariance matrix $W$}\label{sec:tangent space}
In this section, we present a covariance matrix that encapsulates
geometric information concerning the manifold $\mathcal{N}$. The
covariance matrix $W$ is set such that the resulted extension obeys
the Lipschitz property of $\psi$.

Let $\mathcal{T}_j$ be the tangential space to $\mathcal{N}$ in
$\psi(x_j)$ and let $\mathcal{P}_j$ be the orthogonal projection on
$\mathcal{T}_j$. We denote the tangential component of $\omega_j$ by
$\omega_{j}^{t}=\mathcal{P}_j\omega_j$, and its orthogonal
complement by $\omega_{j}^{o}=(\mathcal{I}-\mathcal{P}_j)\omega_j$,
where $\mathcal{I}$ is the identity transformation.
Proposition~\ref{prop:tang-perp-comp} quantifies the tangential and
perpendicular components of $\omega_j$ from Eq.~\ref{eq:basic_rv},
as functions of the curvature of $\mathcal{N}$ in $x_j$ and
$\norminline{x-x_j}$.

\begin{proposition}\label{prop:tang-perp-comp}
Let $\norminline{x-x_j}\leq r$ and assume that the curvature of
$\mathcal{N}$ in $x_j$ is bounded by a constant $c_j$. If $\psi$ is
a Lipschitz function with constant $k$, then $\omega_j^t\leq kr$ and
$\omega_j^o\leq (c_jkr)^2$.
\end{proposition}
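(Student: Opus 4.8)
The plan is to read $\omega_j$ through Eq.~\ref{eq:basic_rv} as the displacement $\omega_j=\hat\psi(x)-\psi(x_j)$ produced by the ideal extension $\hat\psi(x)=\psi(x)$, so that bounding its tangential and normal parts amounts to bounding $\psi(x)-\psi(x_j)$ and its decomposition with respect to $\mathcal{T}_j$. The only place the map $\psi$ (as opposed to the intrinsic geometry of $\mathcal{N}$) enters is through the Lipschitz hypothesis: since $\psi$ has constant $k$ and $\norminline{x-x_j}\le r$, we immediately get $\norm{\psi(x)-\psi(x_j)}\le k\norminline{x-x_j}\le kr$. This bounds the total size of $\omega_j$.

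For the tangential component I would simply use that $\mathcal{P}_j$ is an orthogonal projection, hence a contraction: $\omega_j^t=\norm{\mathcal{P}_j\omega_j}\le\norm{\omega_j}=\norm{\psi(x)-\psi(x_j)}\le kr$. No curvature information is needed here; the estimate is the contractivity of $\mathcal{P}_j$ applied to the Lipschitz bound above.

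The orthogonal component is the substantive step. I would set up a local graph representation of $\mathcal{N}$ over the tangent space $\mathcal{T}_j$ near $\psi(x_j)$: locally $\mathcal{N}=\{u+f(u):u\in\mathcal{T}_j,\ u\ \text{near}\ 0\}$ with $f:\mathcal{T}_j\to\mathcal{T}_j^{\perp}$ satisfying $f(0)=0$ and $Df(0)=0$ (the graph is tangent to $\mathcal{T}_j$ at the base point), and with $\norm{D^2 f}$ controlled by the curvature bound $c_j$. Writing $\psi(x)-\psi(x_j)=u^{\ast}+f(u^{\ast})$, the tangential part is $u^{\ast}$, so $\norm{u^{\ast}}\le\norm{\psi(x)-\psi(x_j)}\le kr$, while the normal part is $\omega_j^o=\norm{f(u^{\ast})}$. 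A second-order Taylor estimate of $f$ at $0$ gives $\omega_j^o\le C\,c_j\norm{u^{\ast}}^2\le C\,c_j(kr)^2$ for an absolute constant $C$, and with the normalization used for the curvature bound $c_j$ this is exactly $\omega_j^o\le(c_jkr)^2$.

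The main obstacle is this last step: making the passage from ``the curvature of $\mathcal{N}$ at $\psi(x_j)$ is bounded by $c_j$'' to a pointwise bound on the normal displacement $\norm{f(u^{\ast})}$ fully rigorous. This is the standard fact that a submanifold departs from its tangent plane quadratically in the displacement, with coefficient governed by the second fundamental form; the care required is (i) choosing the neighborhood small enough — guaranteed by the locality/density assumptions on $M$ together with $\norminline{x-x_j}\le r$ — so that the graph representation and its Taylor remainder are valid, and (ii) checking that the in-plane displacement $\norm{u^{\ast}}$ and the chord $\norm{\psi(x)-\psi(x_j)}$ are interchangeable up to the $kr$ bound already established. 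Everything else is bookkeeping.
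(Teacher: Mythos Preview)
Your proposal is correct and follows essentially the same route as the paper's own proof: both arguments interpret $\omega_j=\psi(x)-\psi(x_j)$, use the Lipschitz bound together with the contractivity of $\mathcal{P}_j$ for the tangential estimate, and then write $\mathcal{N}$ locally as a graph $f$ over $\mathcal{T}_j$ with $f(0)=0$, $Df(0)=0$ and apply a second-order Taylor expansion (with the curvature controlling the quadratic term) to bound the normal component. Your write-up is in fact slightly more careful than the paper's about the constant in front of $(kr)^2$ and about the validity of the graph representation, but the strategy is identical.
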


\begin{proof}
Without loss of generality it is assumed that $\psi(x_j) = 0 \in
\Rn{d}$ and $\mathcal{T}_j=\Rn{m}$. We denote the graph of the
manifold $\mathcal{N}$ in the neighborhood of $0$ by the function
$f:\mathcal{T}_j \rightarrow \Rn{d-m}$, where the data points in
$\mathcal{N}$ are $(z,f(z)),\,z\in\mathcal{T}_j$. Thus we get $f(0)
= 0$ and $\frac{\partial f}{\partial z}(0)=0$. Let $x\in\mathcal{M}$
be a data point in the neighborhood of $x_j$ and let
$\psi(x)=(z_x,f(z_x))$. Namely, $z_x=\mathcal{P}_j\psi(x)$ and
$f(z_x)=(\mathcal{I}-\mathcal{P}_j)\psi(x)$. Then, the Taylor
expansion of $f(z_x)$ around $0$ yields $f(z_x) = f(0) +
\frac{\partial f}{\partial z}(0) (z_x)+ O(\norminline{z_x}^2)$.
Since $\psi$ is assumed to be a Lipschitz function with constant
$k$, we get
$\norminline{z_x}=\norminline{\mathcal{P}_j(\psi(x)-\psi(x_j))}\leq
\norminline{\psi(x)-\psi(x_j)}\leq kr$. Thus, we get that
$\norminline{\omega_j^t}=\norminline{\mathcal{P}_j(\psi(x)-\psi(x_j))}\leq
kr$ and $\norminline{\omega_j^o}\leq (c_jkr)^2$.
\end{proof}

From Eq.~\ref{eq:basic_rv}, Proposition~\ref{prop:tang-perp-comp}
provides a relation between the tangential and perpendicular
components of $\omega_j$ Thus, $\Omega$ from Eq.~\ref{eq:system_rv}
is the $kd$-long vector, whose $j$-th section is the $d$-long vector
$(\omega^t_j,\ldots,\omega^t_j,\omega^o_j,\ldots,\omega^o_j)^T$,
where its first $m$ entries are the tangential weights and the rest
$d-m$ are the perpendicular weights. The corresponding covariance
matrix is the $kd\times kd$ blocks diagonal matrix $W$, whose $j$-th
diagonal element is the $d\times d$ diagonal matrix
$$
w_j = \begin{pmatrix}
\lambda_j^{2}& & & & &\\
 &\ddots& & & &\\
& &\lambda_j^{2}& & & \\
& & &(c\lambda_j)^{4}& &\\
 &&&&\ddots &\\
&&&&&(c\lambda_j)^{4} \end{pmatrix},
$$
where the first $m$ diagonal elements are $\lambda_j^2$ (see
Eq.~\ref{eq:weights}), and the rest $d-m$ are
$(c\lambda_j)^{4}$.Then, the solution is given by
Eq.~\ref{eq:system_rv} while $J$ and $\Psi$ remain the same.

\subsubsection{Tangential Space Approximation}
In real life applications, in order to use the tangential space to
$\mathcal{N}$ in $\psi(x_j)$, it has to be first approximated by
using its neighboring data. In this section,  we approximate the
tangential space and the principle directions of the manifold
$\mathcal{N}$ at $\psi(x_j)$. Then, these approximations are
incorporated in the construction of $W$ to ascribe heavy weights to
the tangential direction and less significant weights to the
perpendicular ones. The principle directions of the data and the
variance of each direction are the eigenvectors and eigenvalues of
the covariance matrix  of a data points $\psi(x_j)$, respectively.
This covariance matrix is also known as the PCA matrix. A
multi-scale version of the local PCA algorithm is analyzed
in~\cite{MA} and it can be used in our analysis. It is important to
take at least as many data points as the dimensionality of
$\mathcal{N}$.

The covariance matrix of a data point $\psi(x_j)$ is computed in the
following way: for simplicity of calculations, we take as the set of
neighbors of $\psi(x_j)$ the set  $\psi(N_{\varepsilon_{1}}(x)) =
{\psi(x_1), \ldots ,\psi(x_k)}$. Then, we form the $k\times d$
matrix $X$, whose rows in the aforementioned set are $$
X=\begin{pmatrix} - \psi(x_1)  -\\ \vdots
\\- \psi(x_k) -\end{pmatrix}.
$$
Accordingly, we define
\begin{equation}\label{eq:cov xj}
cov\left(\psi(x_j)\right) \triangleq
\left(\frac{1}{\varepsilon_1^2}\right)\left(\frac{1}{k}\right)X
X^{t}, ~~\mbox{ for all } i=1, \ldots k.
\end{equation}
Since we take the same set of data points then for all $i$ and $j$
we have $cov(\psi(x_j)) = cov(\psi(x_i)) $. To make the calculation
and stability issues easier we add the $(c \cdot \lambda_i)^{4}$
component to all the diagonal components. Consequently, we define:
\begin{equation}\label{eq:tang weights}
w_j \triangleq\left(\lambda_j^{-2} cov(\psi(x_j)) + \begin{pmatrix} (c \cdot
\lambda_j)^{-4} & 0 & \ldots&0\\ 0  & (c \cdot \lambda_j)^{-4}  &
\ldots&0\\ & \vdots& \ddots &\\0&\ldots&0&(c \cdot \lambda_j)^{-4}
\end{pmatrix}\right)^{-1}.
\end{equation}

Since $w_i$ is positive definite, it is invertible. We notice that
it was possible to add the $(c \cdot \lambda_i)^{-4}$ weight
components only to the least significant directions of the
covariance matrix by computing the SVD \cite{MA} of the covariance
matrix. This does not improve the accuracy significantly but adds
more complexity to the computation.  $W$ is a block diagonal matrix
with the same structure as appears in Eq.~\ref{eq:W}.

Another option is to make different estimations for the tangential
space in different data points by using different sets of data
points in the covariance matrix computation. While this estimation
should be more accurate, it is more computationally expensive.

\section {Bounding the error of the out-of-sample extension}
\label{sec:bound}

In this section, we prove that the error of the out-of-sample
extension is bounded in both cases of distance-based weights
(Eq.~\ref{eq:simple w}) and the tangential-based
weights(Eq.~\ref{eq:tang weights}). It means that for any function
$\psi : \mathcal{M}  \rightarrow \mathcal{N}$, which agrees on a
given set of data points and satisfies certain conditions, the
out-of-sample extension $\hat{\psi}(x)$ of the data point $x$  is
close to $\psi (x)$.

First, we prove the consistency of the algorithm. In other words,
the out-of-sample extension of data points, which  coverage to an
already known data point, will converge to its already known image.

\begin{lem}
\label{lemma:5}
 Assume $x\in \mathcal{M}$. If $x \rightarrow x_j
\in M$ then $\hat\psi(x)\rightarrow \psi(x_j)$.
\end{lem}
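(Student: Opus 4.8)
The plan is to read the limit straight off the closed form of the GLS estimator. Because $J=[I_d,\ldots,I_d]^{T}$ and $W=\operatorname{diag}(w_1,\ldots,w_k)$ is block diagonal, Eq.~\ref{eq:psi_hat_solution} collapses to a matrix-weighted average,
\[
\hat\psi(x)=\Big(\sum_{i=1}^{k}w_i^{-1}\Big)^{-1}\sum_{i=1}^{k}w_i^{-1}\psi(x_i),
\]
the sum running over the neighbours $x_i\in N_\varepsilon(x)$, where each block $w_i$ (Eq.~\ref{eq:simple w} in the distance based case, Eq.~\ref{eq:tang weights} in the tangential one) is symmetric positive definite and hence invertible, provided $x\notin M$ --- which we may assume for an out-of-sample point, so in particular $x\ne x_j$. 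Everything then hinges on two elementary facts: (i) since $M$ is a finite set of distinct points, $\delta\triangleq\min_{i\ne j}\norminline{x_j-x_i}>0$, so once $\norminline{x-x_j}<\min\{\varepsilon,\delta/2\}$ we have $x_j\in N_\varepsilon(x)$ and $\norminline{x-x_i}\ge\delta/2$ for every other neighbour; (ii) consequently the $x_j$-block satisfies $\norminline{w_j}\to0$ as $x\to x_j$, while $\norminline{w_i^{-1}}$ for $i\ne j$ stays bounded by a constant depending only on $\varepsilon$, $\delta$ and $\psi(M)$.

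Granting (i) and (ii), I would conclude as follows. Subtracting $\psi(x_j)$ and using $\big(\sum_i w_i^{-1}\big)^{-1}\sum_i w_i^{-1}=I_d$ gives
\[
\hat\psi(x)-\psi(x_j)=\Big(\sum_{i}w_i^{-1}\Big)^{-1}\sum_{i\ne j}w_i^{-1}\big(\psi(x_i)-\psi(x_j)\big),
\]
the $i=j$ term having cancelled. In the positive semidefinite order $\sum_i w_i^{-1}\succeq w_j^{-1}\succ0$, so $\norminline{\big(\sum_i w_i^{-1}\big)^{-1}}\le\norminline{w_j}$; the remaining factor is bounded in norm by $\sum_{i\ne j}\norminline{w_i^{-1}}\,\norminline{\psi(x_i)-\psi(x_j)}\le C$ for a constant $C$ independent of $x$, since only finitely many neighbour sets $N_\varepsilon(x)\subseteq M$ can occur and each summand is controlled by (ii) together with the Lipschitz continuity of $\psi$ on the compact $\mathcal M$. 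Hence $\norminline{\hat\psi(x)-\psi(x_j)}\le C\,\norminline{w_j}\to0$, which is the assertion.

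The only substantive step is (ii). In the distance based case it is immediate: $w_j=\norminline{x-x_j}^{2}I_d\to0$ and $w_i^{-1}=\lambda_i^{2}I_d$ with $\lambda_i=\norminline{x-x_i}^{-1}\le 2/\delta$. In the tangential case one reads it off Eq.~\ref{eq:tang weights}: the block $w_j$ is built from $\lambda_j^{-2}$ and $\lambda_j^{-4}$ scaled pieces of the fixed local covariance matrix, so by Proposition~\ref{prop:tang-perp-comp} its eigenvalues are of order $\norminline{x-x_j}^{2}$ in the tangential directions and $\norminline{x-x_j}^{4}$ in the normal ones, whence $\norminline{w_j}\to0$, while for $i\ne j$ the matrix $w_i^{-1}$ is a continuous function of the bounded $\lambda_i$ and of a fixed covariance matrix and is therefore uniformly bounded. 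I expect the main obstacle to be bookkeeping rather than ideas: turning (ii) into explicit estimates from the somewhat involved Eq.~\ref{eq:tang weights}, and handling the fact that $N_\varepsilon(x)$ need not be constant along a sequence $x\to x_j$ --- the latter is harmless exactly because $M$ is finite, so $C$ can be taken as the maximum over the finitely many possible neighbour configurations.
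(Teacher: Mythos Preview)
Your argument follows exactly the intuition the paper offers in lieu of a formal proof: as $x\to x_j$ the weight attached to the equation $y=\psi(x_j)$ in Eq.~\ref{eq:system_rv} diverges and therefore dominates the matrix-weighted average defining $\hat\psi(x)$. The paper does not go beyond that one-sentence heuristic, so your explicit estimates --- the PSD-ordering bound $\norminline{(\sum_i w_i^{-1})^{-1}}\le\norminline{w_j}$ and the uniform control of the $i\ne j$ terms via the finiteness of $M$ --- actually supply more detail than the paper itself.
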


An intuition for the proof of Lemma \ref{lemma:5} is that the
distance from the point $x_i \in M$ is  inversely proportional to
the weight of the equation $y = \psi(x_i)$ in
 Eq.~\ref{eq:system_rv}, therefore, when $x \rightarrow x_i $, the
distance tends to $0$ and the weight tends to $\infty$. Notice that
when $x=x_i$ then, according to Eq.~\ref{eq:weights}, $\lambda =
\infty$ and the out-of-sample extension is undefined.

\begin{defa}
The dataset $M\subset\mathcal{M}$ is called a $\delta$-net of the manifold $\mathcal{M}$ if for any data point $x \in \mathcal{M}$ there is $ \tilde x \in M$ such that $\norminline{x-\tilde x} \leq \delta$.
\end{defa}

\begin{theorem}
Assume that $M$ is a $\delta$-net of $\mathcal{M}$.  Let $\psi:\mathcal{M} \rightarrow \mathcal{N}$ be a Lipschitz function  with a constant $K$. If $\varepsilon_1 = \delta$ and $\hat\psi(x)$ is computed using the weights in Eq.~\ref{eq:simple w},  then $\norminline{\hat\psi(x)- \psi(x)} \leq 3K\delta$.
\end{theorem}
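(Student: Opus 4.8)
The plan is to read off from Eq.~\ref{eq:psi_hat_solution} that, for the distance-based weights of Eq.~\ref{eq:simple w}, the extension $\hat\psi(x)$ is nothing but a convex combination of the images $\psi(x_j)$ of the neighbors $x_j\in N_\delta(x)$, and then to estimate that convex combination by the Lipschitz bound, routed through a net-point of $x$.

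First I would specialize the GLS formula. With the weights of Eq.~\ref{eq:simple w} each block $w_j$ is a positive scalar multiple of $I_d$, so $W$ and $W^{-1}$ are block-diagonal with scalar blocks. Substituting the block structures of $J=[I_d,\ldots,I_d]^{T}$, $W$ and $\Psi$ into Eq.~\ref{eq:psi_hat_solution}, the $d\times d$ matrix $J^{T}W^{-1}J$ collapses to $\big(\sum_j c_j\big)I_d$ and $J^{T}W^{-1}\Psi$ to $\sum_j c_j\,\psi(x_j)$, where $c_j>0$ is the reciprocal of the $j$-th scalar weight. Hence
\[
\hat\psi(x)=\sum_{x_j\in N_\delta(x)}\alpha_j\,\psi(x_j),\qquad \alpha_j=\frac{c_j}{\sum_i c_i}>0,\qquad \sum_j\alpha_j=1,
\]
so $\hat\psi(x)$ lies in the convex hull of $\psi\big(N_\delta(x)\big)$. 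Since $M$ is a $\delta$-net and $\varepsilon_1=\delta$, the neighborhood $N_\delta(x)$ is non-empty --- it contains a net-point $\tilde x$ with $\norminline{x-\tilde x}\le\delta$ --- so $J^{T}W^{-1}J$ is invertible and $\hat\psi(x)$ is well defined (the degenerate case $x\in M$ being handled by continuity, giving $\hat\psi(x)=\psi(x)$ in accordance with Lemma~\ref{lemma:5}).

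Next I would run the triangle inequality through $\tilde x$. Every $x_j\in N_\delta(x)$ has $\norminline{x-x_j}\le\delta$, hence $\norminline{\tilde x-x_j}\le\norminline{\tilde x-x}+\norminline{x-x_j}\le 2\delta$, and the Lipschitz property gives $\norminline{\psi(x_j)-\psi(\tilde x)}\le 2K\delta$. As $\hat\psi(x)-\psi(\tilde x)=\sum_j\alpha_j\big(\psi(x_j)-\psi(\tilde x)\big)$ is a convex combination of vectors of norm at most $2K\delta$, we get $\norminline{\hat\psi(x)-\psi(\tilde x)}\le 2K\delta$; combining this with $\norminline{\psi(\tilde x)-\psi(x)}\le K\norminline{\tilde x-x}\le K\delta$ yields $\norminline{\hat\psi(x)-\psi(x)}\le 3K\delta$. (Routing directly, $\hat\psi(x)-\psi(x)=\sum_j\alpha_j(\psi(x_j)-\psi(x))$ with $\norminline{\psi(x_j)-\psi(x)}\le K\delta$ already gives the sharper $K\delta$; I keep the net-point version because it is the one that will generalize to the tangential-weight case, where the solution need not be a convex combination.)

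The only step carrying real content is the first: verifying that with scalar weight blocks the GLS solution degenerates to this explicit convex combination. That is a routine substitution into Eq.~\ref{eq:psi_hat_solution}; the one point to be careful about is the variance/inverse-variance convention for $w_j$, which has to be pinned down so that the coefficients $c_j$ are positive and so that $\alpha_j\to 1$ as $x\to x_j$, consistently with Lemma~\ref{lemma:5}. Everything after that is two applications each of the triangle inequality and of the Lipschitz bound, so I do not expect a genuine obstacle.
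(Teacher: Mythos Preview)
Your proposal is correct and follows essentially the same route as the paper: both arguments recognize that with scalar weight blocks $\hat\psi(x)$ is a convex combination of the $\psi(x_j)$, hence lies in a ball of radius $K\delta$ containing all $\psi(x_j)$, and then route the triangle inequality through a neighbor to get $2K\delta+K\delta=3K\delta$. Your version is a bit more explicit (you actually write out the convex-combination formula and note the sharper direct bound $K\delta$), and your net-point $\tilde x$ is simply a particular choice of the $x_i$ the paper uses, so there is no substantive difference.
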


\begin{proof}
We denote by $ N_{\delta}(x) = \{x_1,...x_k\}$ the set of data points in the $\varepsilon_1
=\delta $ neighborhood of $x$.  It is easy to see that all the data
points of  $\psi(x_i)$ are inside a ball $B \subset \mathcal{N}$ of
radius $K\varepsilon_1$. Therefore, the out-of-sample extension $y$ is also in
this ball, namely
\begin{equation} \label{eq:th1es1}
\norm{\hat\psi(x)-\psi(x_i)}<2K\varepsilon_1 .
\end{equation}
Since $\psi$ is a Lipschitz function  and $\norminline{x-x_i}<\varepsilon_1$,
we have
\begin{equation} \label{eq:th1es2}
\norm{\psi(x_i)-\psi(x)}<K\varepsilon_1 .
\end{equation}
By combining Eqs. \eqref{eq:th1es1} and \eqref{eq:th1es2}, we get $
\norm{\hat\psi(x)-\psi(x)}  \leq
\norm{\hat\psi(x)-\psi(x_i)}+\norm{\psi(x_i)-\psi(x)}  \leq
3K\varepsilon_1$.

 \end{proof}

Next, we show an identical result for the case where the weights
from Eq.~\ref{eq:tang weights} are utilized to construct the
covariance matrix $W$. Moreover, the approximations of the
tangential spaces converge to the correct tangential space as
$\varepsilon_1$ tends to $0$.

\begin{thm} \label{thm:main}
Let $M$ be a $\delta$-net of $\mathcal{M}$ and let $\psi:\mathcal{M} \rightarrow \mathcal{N}$  be a Lipschitz function with a constant $K$. If $\varepsilon_1 = \delta$ and $\hat\psi(x)$ is computed using the weights in Eq.~\ref{eq:tang weights},  then $\norminline{\hat\psi(x)- \psi(x)} \leq 3K \delta$.
\end{thm}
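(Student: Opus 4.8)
The plan is to mimic the proof of the previous theorem (the distance-based case) and reduce the tangential-based case to the same geometric fact, namely that all the relevant image points $\psi(x_i)$, together with the computed extension $\hat\psi(x)$, are trapped in a small ball of $\mathcal{N}$. The key observation is that the bound $\norminline{\hat\psi(x)-\psi(x)}\leq 3K\delta$ in the distance-based case did not really use the precise form of the weights $w_j$; it used only that $\hat\psi(x)$, being a GLS solution of the system $J\hat\psi(x)=\Psi+\Omega$ with a \emph{positive definite} block-diagonal $W$, is a weighted average of the $\psi(x_i)$ and therefore lies in their convex hull. Since $M$ is a $\delta$-net and $\varepsilon_1=\delta$, every $x_i\in N_\delta(x)$ satisfies $\norminline{x-x_i}\leq\delta$, so by the Lipschitz property every $\psi(x_i)$ lies in the ball $B\subset\mathcal{N}$ of radius $K\delta$ about, say, $\psi(x_1)$; hence $\hat\psi(x)\in B$ as well, giving $\norminline{\hat\psi(x)-\psi(x_i)}\leq 2K\delta$ exactly as in Eq.~\eqref{eq:th1es1}.

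The steps I would carry out, in order, are: (i) recall from Eq.~\eqref{eq:psi_hat_solution} that $\hat\psi(x)=(J^TW^{-1}J)^{-1}J^TW^{-1}\Psi$ and verify that, because each block $w_j$ in Eq.~\eqref{eq:tang weights} is symmetric positive definite (stated in the text, since $w_j$ is the inverse of a positive definite matrix), the matrix $J^TW^{-1}J=\sum_j w_j^{-1}$ is invertible and the map $\Psi\mapsto\hat\psi(x)$ is an affine combination with coefficients $C_j\triangleq(\sum_i w_i^{-1})^{-1}w_j^{-1}$ that are positive semidefinite and sum to $I_d$; (ii) conclude that $\hat\psi(x)=\sum_j C_j\,\psi(x_j)$ lies in the convex hull of $\{\psi(x_1),\dots,\psi(x_k)\}$ — here I would note that because the $C_j$ are matrices rather than scalars one cannot literally say ``convex combination,'' but since all $\psi(x_j)$ lie in the ball $B$ of radius $K\delta$ centered at $\psi(x_1)$, one has $\hat\psi(x)-\psi(x_1)=\sum_j C_j(\psi(x_j)-\psi(x_1))$ and $\norminline{\hat\psi(x)-\psi(x_1)}\leq\sum_j\norminline{C_j}\,K\delta$; (iii) bound $\sum_j\norminline{C_j}_{\mathrm{op}}$ — this is where care is needed, since a sum of operator norms of the $C_j$ need not equal $1$. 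I would instead argue directly that $\hat\psi(x)$ minimizes $\norminline{J\hat\psi(x)-\Psi}_W$ over $\mathbb{R}^d$ (Eq.~\eqref{eq: mahalanobis}), and compare against the candidate $\psi(x_1)\in B$: the minimizer cannot be farther from $B$ than $\psi(x_1)$ in a sense that forces $\hat\psi(x)\in B$ up to a factor, because each block of $W^{-1}$ is a \emph{spherical} quadratic form on the tangential part only through its eigen-structure; (iv) combine $\norminline{\hat\psi(x)-\psi(x_i)}\leq 2K\delta$ with the Lipschitz estimate $\norminline{\psi(x_i)-\psi(x)}\leq K\delta$ (Eq.~\eqref{eq:th1es2}) by the triangle inequality to obtain $\norminline{\hat\psi(x)-\psi(x)}\leq 3K\delta$.

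The main obstacle is step (iii): unlike the scalar-weighted distance-based case, the tangential weights $w_j$ are genuine $d\times d$ positive definite matrices, so the GLS solution is an affine \emph{matrix-weighted} combination of the $\psi(x_j)$, and it is not immediate that such a combination stays inside the convex hull (or even inside a ball slightly larger than $B$). The cleanest fix is probably the variational one: since $\hat\psi(x)=\arg\min_\beta\sum_j(\beta-\psi(x_j))^T w_j^{-1}(\beta-\psi(x_j))$, evaluate the objective at $\beta=\psi(x_i)$ for a fixed $i$; the optimal value is no larger, and since each $w_j^{-1}$ has smallest eigenvalue bounded below (by the construction in Eq.~\eqref{eq:tang weights}, whose eigenvalues are controlled by $\lambda_j$ and $c$) while $\|\psi(x_i)-\psi(x_j)\|\leq 2K\delta$, one extracts $\norminline{\hat\psi(x)-\psi(x_i)}\leq 2K\delta$ with a constant that the authors evidently intend to be $2$ (absorbing the eigenvalue ratios into the ``net'' hypothesis, or using that all $w_j$ share the same $cov(\psi(x_j))$, cf.\ the remark after Eq.~\eqref{eq:cov xj}, which makes the matrix weights commute and reduces the estimate to the scalar one direction-by-direction). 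Given that simplification, steps (i), (ii), and (iv) are routine and the proof reads almost verbatim like the preceding theorem's.
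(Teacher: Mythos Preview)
Your parenthetical at the end of step~(iii) is the paper's actual argument, not a side remark. The proof in the paper rests entirely on the observation that every $w_j$ in Eq.~\eqref{eq:tang weights} is built from the \emph{same} covariance matrix (this is precisely the content of the sentence following Eq.~\eqref{eq:cov xj}): each $w_j^{-1}$ is a positive linear combination of that common matrix and $I_d$, so all the $w_j$ commute and are simultaneously diagonalized by a single change of basis $T$. Writing $D_j=T w_j T^{-1}$ and expanding Eq.~\eqref{eq:psi_hat_solution}, the paper obtains
\[
T\hat\psi(x)=\Bigl(\sum_j D_j\Bigr)^{-1}\sum_j D_j\,T\psi(x_j),
\]
and since every $D_j$ is diagonal with positive entries, each coordinate of $T\hat\psi(x)$ is a scalar convex combination of the corresponding coordinates of the $T\psi(x_j)$. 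From this the paper concludes that $\hat\psi(x)$ lies in the ball of radius $K\varepsilon_1$ containing all the $\psi(x_i)$, and the triangle inequality finishes exactly as in the preceding theorem.

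Your primary route through step~(iii)---the variational comparison of the GLS objective at $\hat\psi(x)$ versus at some $\psi(x_i)$---does not recover the constant~$2$: it yields $\|\hat\psi(x)-\psi(x_i)\|\le 2K\delta$ only up to a factor involving the condition numbers of the $w_j^{-1}$, and ``absorbing the eigenvalue ratios into the net hypothesis'' is not something the theorem statement permits. The simultaneous diagonalization is not a cosmetic simplification of your argument but a replacement for it; promote it from the parenthesis to the main line and drop the variational scaffolding.
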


\begin{proof}
Let $ N_{\delta}(x) = \{x_1,...x_k\}$ be the  $\delta $ neighborhood
of $x$. Then, the weight matrix becomes
$$
W = \begin{pmatrix} w_1& 0 & \ldots&0\\ 0  & w_2& \ldots&0\\ &
\vdots& \ddots &\\0&\ldots&0&w_k \end{pmatrix} .
$$

By using Eq.  \eqref{eq:psi_hat_solution}, we get
\begin{equation} \label{eq:yinw_i}
\hat\psi(x) = \left(\sum{ w_i} \right)^{-1}  \sum{w_i \psi(x_i)},
\end{equation}
where the $w_i$ matrices are defined in Eq.~\ref{eq:tang weights}.
The structure of the $w_i$ matrices allows us to find a basis in
which all the matrices $w_{i}$ become diagonal. Let us denote the diagonal form of $w_i$ by $D_i$. Then $D_i = T w_{i} T^{-1}$ where $T$ is the transformation matrix. We can
rewrite Eq. \eqref{eq:yinw_i} to become
$$
\begin{array}{lll}
T\hat\psi(y) & = & T\left(\sum{ w_i} \right)^{-1}T^{-1}T  \sum{w_i \psi(x_i)} \\
& = & \left(\sum{T w_i T^{-1}} \right)^{-1}  \sum{T w_i T^{-1}T \psi(x_i)} \\
& = & \left(\sum{D_i} \right)^{-1}  \sum{D_i T \psi(x_i)} .
\end{array}
$$
Since all $D_i$ are diagonal,  we get a weighted average of the data
points $\psi(x_i)$ in the new basis, which is known to be in convex hull.
Thus, it is located inside a ball that contains all the data points. It
means that $\hat\psi(x)$ is inside a ball of radius $K\varepsilon_1$ that
contains all $\psi(x_i)$. Therefore,
$$
\begin{array}{lll}
\|\hat\psi(x)-\psi(x)\| & = & \|\hat\psi(x)-\psi(x_i)+\psi(x_i)-\psi(x)\| \\
& \le & \|\hat\psi(x)-\psi(x_i)\|+\|\psi(x_i)-\psi(x)\| \\
& \le &  2K\varepsilon_1+ K\varepsilon_1= 3K\varepsilon_1 =3K\delta .
\end{array}
$$

\end{proof}

\section{Out-of-sample extension complexity}
\label{sec:complexity} Recall that the dataset $M$ consists of $p$
data points and assume that the number of data points in the
neighborhood of $x$ is $k$. The covariance matrix of a data point
$\psi(x_j)$ from Eq.~\ref{eq:cov xj} is also computed once  for each
data point in $M$, considering each of its $k$ neighbors. The
complexity of the neighborhood computation is $O(p)$ operations.
Then, the covariance matrix is computed in $O(dk^2)$ operations.
Thus, the complexity of this pre-computation stage is $O(p \cdot
(p+dk^2))=O(p^2)$ operations. For each data point, we multiply
vectors of size $d\cdot k$ by  matrices of size $k\times d\cdot k$
or $k \times k$. Thus, the out-of-sample extension complexity is
$O(k^2 \times d^2)$ operations.


\section{Experimental results}
\label{sec:examples}
\subsection{Example I: Data points on a sphere}
The function $\psi: [0,\pi] \times [0,\pi] \rightarrow
\mathbbm{R}^3$ maps the spherical coordinates $(\phi, \theta)$ into
a 3-D sphere of radius $1$. More specifically, $ \psi(\phi, \theta)
= \left(\sin(\phi)\cos(\theta), \sin(\phi)\sin(\theta),
\cos(\phi)\right). $ We generate $900$ data points angularly equally
distributed where we have $30$ data points  on each axis as a
training dataset.
 We generate 100 random data points for which we compute the out-of-sample extension.
 The results from the application of the  algorithm using weights as defined in Eq. \ref{eq:simple w}, are shown in Fig. \ref{fig:sphere}.
In Fig. \ref{fig:sphere_3estimations}, we can see three different
results from an out-of-sample extension using different weights as
presented in Section \ref{sec:feature based variance}. In Table
\ref{algorithm_error}, we show how the results get better for more
advanced weight algorithms. We display an accurate error mean for
the algorithm. We also show the improvement  of the results when we
take $2500$ data points angularly equally distributed with $50$ data
points on each axis:

\begin{table}[H]
\centering
    \begin{tabular}{ | p{5cm} | p{2cm} | p{3cm} | p{3cm} |}
    \hline
    Algorithm type  & Color in Fig. \ref{fig:sphere_3estimations}& Mean error for 900 data points & Mean error for 2500 data points \\ \hline
    Weights as in Eq. \ref{eq:simple w} & Yellow & $1.04 \cdot10^{-2}$ & $6.01 \cdot10^{-3}$ \\ \hline

    Weights as in Eq. \ref{eq:tang weights} & Red & $8.08  \cdot10^{-3}$ & $4.45  \cdot10^{-3}$ \\ \hline

    Weights as in Eq. \ref{eq:tang weights} but with different estimations for the tangential space at each data point & Black & $6.14  \cdot10^{-3}$  & $3.17  \cdot10^{-3}$ \\
    \hline
    \end{tabular}
\caption{The mean error performances of the algorithms for different
number of data points and different weights} \label{algorithm_error}
\end{table}

\begin{figure}[H]
\includegraphics[width=15cm,keepaspectratio]{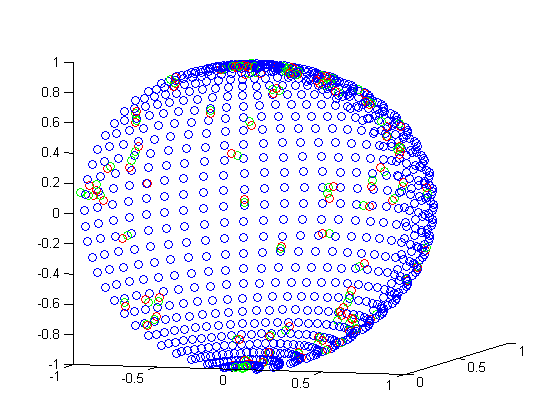}
\caption{An illustration of the out-of-sample extension algorithm on
a sphere. Blue - the original data set, green - the correct images,
red - the out-of-sample extension calculated using the algorithm
with weights from Eq. \ref{eq:simple w}.} \label{fig:sphere}
\end{figure}

\begin{figure}[H]
\includegraphics[width=15cm,keepaspectratio]{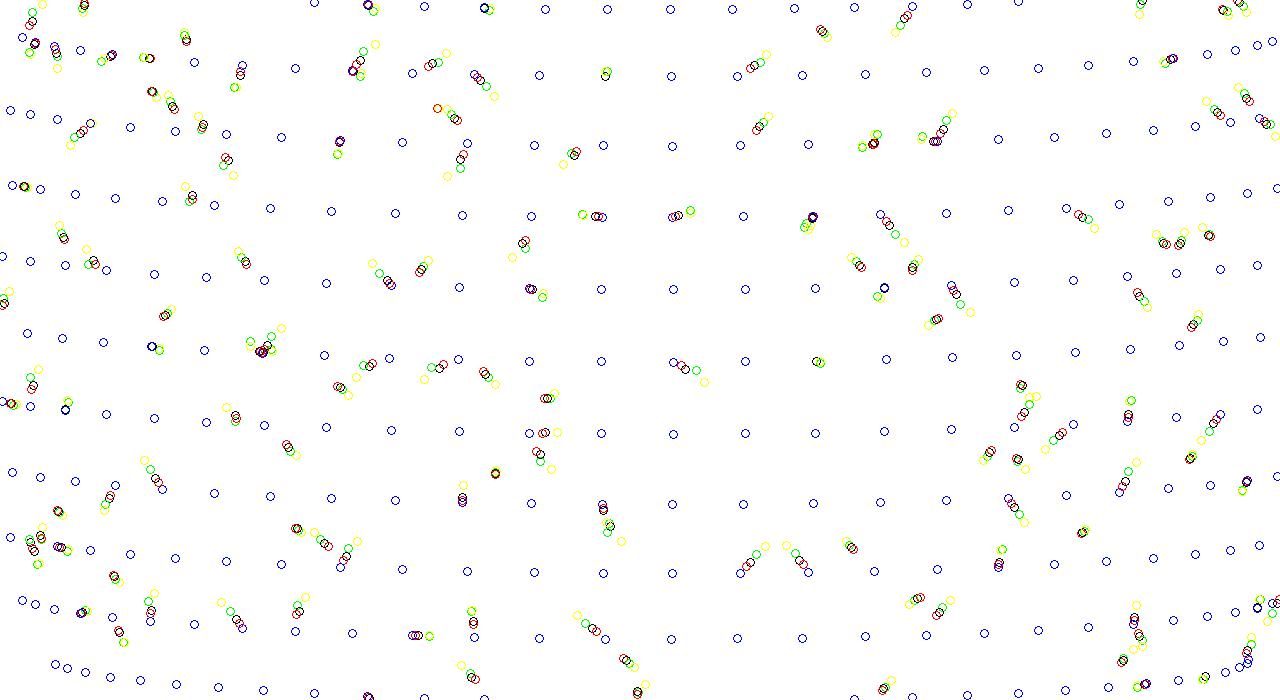}
\caption{An illustration of the algorithms on a sphere in Table
\ref{algorithm_error}. Blue - the original data set, green - the
correct images. yellow - the out-of-sample extension computed  using
the algorithm with weights from Eq. \ref{eq:simple w}. Red - the
out-of-sample extension computed  using the algorithm with weights
from Eq. \ref{eq:tang weights}. Black - the out-of-sample extension
computed using the weights from Eq. \ref{eq:tang weights}, but with
different estimations for the tangential space at each data point.}
\label{fig:sphere_3estimations}
\end{figure}

\subsection{Example II: Dimensionality reduction example}

DARPA datasets \cite{DP99} from 1998 and 1999 are utilized here  to
find anomalies in them. All the activities and non-activities are
labeled and published. These datasets contain different types of
cyber attacks that we consider as anomalies.

We use this dataset to evaluate the performance of the out-of-sample
extension using weights from Eq. \ref{eq:tang weights} and the
Mahalanobis distance from Eq.~\ref{eq:maha measure}. The experiment
done by following the example in \cite{gild:PHD}. We use the same
data and same mapping that was developed in \cite{gild:PHD}.
Diffusion Maps (DM) \cite{coifman:DM}, which was applied to DARPA
data, reduces the dimensionality by embedding $\mathbbm{R}^{14}$ to
$\mathbbm{R}^{5}$ such that $\psi:\mathbbm{R}^{14} \rightarrow
\mathbbm{R}^5$.

We present two experiments using this data to evaluate the
performance of the out-of-sample extension. The first experiment is
an out-of-sample extension for non-anomalous data points by
comparing the original results from the DM embedding. The second
experiment evaluates the anomaly detection of the algorithm.

\subsubsection{Out-of-sample extension on DARPA data}
In this experiment, we use 800 data points and an embedding function
$\psi:\mathbbm{R}^{14} \rightarrow \mathbbm{R}^5$ that was described
before. By taking a random subset $\{x_1, \ldots, x_k\}$ of data
points and by using the values $\{\psi(x_1), \ldots, \psi(x_k)\}$,
we approximate $\psi$ on $50$ data from the 800. We compare the
approximated result to the correct values of $\psi$ and measure the
error. To evaluate the performance of our method,  we compare these
results to the results from other leading methods such as the
classic Nystr\"om method, the Multiscale data sampling and function extension (MSE) method described in
\cite{sampta,bermanis:multiSampExtACHA} and the auto-adaptative
Laplacian Pyramids method described in
\cite{rabin2012heterogeneous}.

To make the presentation self contained,  Nystr\"om, MSE and
auto-adaptative Laplacian pyramids methods are outlined next.

\begin{description}
\item[Nystr\"om method:]
The Nystr\"om method \cite{baker,num_rec} is vastly used for an
out-of-sample extension in dimensionality reduction methods. It is a
numerical scheme for the extension of integral operator
eigenfunctions. It finds a numerical approximation for the
eigenfunction problem
\begin{equation}
 \int_a^b G(x,y) \phi (y) dy = \lambda \phi (x)
 \label{NYS1}
\end{equation}
where $\phi$ is an eigenfunction and $\lambda$ is the corresponding
eigenvalue. Given a set of equidistant points $\{x_{j}\}_{j=1}^{n}
\subset[a,b]$. Assume that $G$ is similarity matrix that is defined on the data whose $(i, j)$ entry measures the similarity between the data points $x_{i}$ and $x_{j}$, namely
\begin{equation}\label{eq:G}
G \triangleq \left[\begin{array}{cccc}
g\left(x_1,x_1\right) & g\left(x_1,x_2\right) & \cdots & g\left(x_1,x_n\right)\\
g\left(x_2,x_1\right) & g\left(x_2,x_2\right) & \cdots & g\left(x_2,x_n\right)\\
\vdots & \vdots & \ddots & \vdots\\
g\left(x_n,x_1\right) & g\left(x_n,x_2\right) & \cdots & g\left(x_n,x_n\right)\end{array} \right].
\end{equation}
A Gaussian function is a popular choice for $g$, and it is given by
\begin{equation}\label{eq:gaus}
g_\epsilon\left(x,x'\right)\triangleq \exp\left(-\left\Vert x-x'\right\Vert^2/\epsilon\right),
\end{equation}
where $\left\Vert\cdot\right\Vert$ constitutes a metric on the space.
Then, Eq. \ref{NYS1} can be approximated by a quadrature
rule to become $\frac{b-a}{n}\sum_{j=1}^{n} G(x_{i},x_{j})\phi
(x_{j})=\lambda \phi (x_{i})$. Then, the Nystr\"om extension of
$\phi$ to a new data point $x_\ast$ is $
\hat{\phi}(x_\ast)\stackrel{\Delta}{=}
\frac{b-a}{n\lambda}\sum_{j=1}^{n} G(x_\ast,x_{j})\phi (x_{j})$.

If $G$ is symmetric, then its normalized eigenfunctions
$\left\{\phi_i\right\}_{i=1}^n$ constitute an orthonormal basis to
$\mathbb{R}^n$. Thus, any vector $f=\left[f_1\,f_2\, \ldots \,
f_n\right]^T$, ($f_j=f\left(x_j\right),\,j=1,\ldots,n$)  can be
decomposed into a superposition of its eigenvectors
 $f=\sum_{i=1}^n \left(f^T\cdot \phi_i\right)\phi_i$.
Then, the Nystr\"om extension of $f$ to $x_\ast$ becomes $f_\ast
\triangleq \sum_{i=1}^n \left(f^T\cdot
\phi_i\right)\hat{\phi_i}\left(x_\ast\right)$.

\par\noindent

\item[MSE method:]
\mbox{}


\begin{algorithm}[H]
\caption{Randomized interpolative decomposition\protect \\
\textbf{Input:} An $m\times n$ matrix $A$ and an integer $l$, s.t.
$l<\min\{m,n\}$.\newline \textbf{Output:} An $m\times l$ matrix $B$
and an $l \times n$ matrix $P$ that satisfies $\left\Vert
A-BP\right\Vert\lesssim l\sqrt{mn}\sigma_{l+1}(A)$}.
\label{alg:rand_ID}
\begin{algorithmic}[1]

\STATE Use a random number generator to form a real $l \times n$
matrix $G$ whose entries are i.i.d Gaussian random variables of zero
mean and unit variance. Compute the $l \times n$ product matrix $
W=GA$.

\STATE Apply the pivoted QR routine to $W$ (Algorithm 5.4.1 in
\cite{golub}), $WP_R=QR$, where $P_R$ is an $n\times n$ permutation
matrix, $Q$ is an $l \times l $ orthogonal matrix, and $R$ is an $l
\times n$ upper triangular matrix, where the absolute values of the
diagonal are ordered decreasingly.

\STATE Split $R$ s.t.
\begin{equation*}
R=\left(\begin{array}{c|c}R_{11}&R_{12}\\
\hline 0&R_{22}\end{array}\right),
\end{equation*}

where $R_{11}$  is $l \times l $, $R_{12}$ is $l \times \left(n-l
\right)$ and $R_{22}$ is $\left(k  - l \right)\times\left(n-l
\right)$.

\STATE\label{step4} Define the $l \times l $ matrix $S=QR_{11}$.

\STATE\label{sampling} From Step \ref{step4}, the columns of $S$ constitute a subset of the columns of $W$. In other words, there exists a finite sequence $i_1,i_2,...,i_{l-1},i_{l}$ of integers such that, for any $j=1,2,...,l-1,l$, the $j$th column of $S$ is the $i_j$th column of $W$. The corresponding columns of $A$ are collected into a real $n\times l$ matrix $B$, so that, for any  $j=1,2,...,l-1,l$, the $j$th column of $B$ is the $i_j$th column of $A$. Then, the sampled dataset is $D_s=\left\{x_{i_1},x_{i_2},...,x_{i_{l-1}},x_{i_{l}}\right\}$.\\

\end{algorithmic}
\end{algorithm}

\pagebreak

\begin{algorithm}[H]
\caption{Single-scale extension\protect \\
\textbf{Input:} $N\times l_s$ matrix $B_s$, the associated sampled
data $D_s=\left\{x_{i_1},x_{i_2},...,x_{i_{l-1}},x_{i_{l}}\right\}$,
a new data point $x$ and  a function $\bar{f}=\left(f(x_1)\,
f(x_2)\,\ldots\,f(x_n)\right)^T$ to be extended.  \newline
\textbf{Output:} The projection $\bar{f}_s =
\left(f_s(x_1),f_s(x_2),\ldots,f_s(x_n)\right)^T$ of $f$  on the
numerical range of the associated kernel matrix, its extension
$f_s(x)$ to $x$,  and the sampled dataset $D_s$.} \label{alg: sse}
\begin{algorithmic}[1]

%

\STATE Apply SVD to $B_s$, s.t. $B_s = U_s\Sigma_s V_s^{\ast }$.

\STATE Calculate the pseudo-inverse
$B_s^{\dagger}=V\Sigma^{-1}U^{\ast}$ of $B_s$.

\STATE Calculate the coordinates vector
$c=(c_1,c_2,\ldots,c_{l_s})^T=B_s^{\dagger}f$ of the orthogonal
projection of $\bar{f}_s$ on the range of $B_s$ in the basis of
$B_s$'s columns.

\STATE \label{sse:step6} Calculate the orthogonal projection
$\bar{f_s}= B_s c$ of $f$ on $B_s$. .

\STATE \label{sse:step7} Calculate the extension of $\bar{f}_s$ to
$x$ s.t. $f_s\left(x\right)=\left(g_s(\Vert x-x_{s_1}\Vert),
g_s(\Vert x-x_{s_2}\Vert), \ldots, g_s(\Vert
x-x_{s_{l_s}}\Vert)\right) c$.
\end{algorithmic}
\end{algorithm}

\begin{algorithm}[H]
\caption{MSE\protect \\
\textbf{Input:} A dataset $D=\{x_1,\ldots,x_n\}$ in $\mathbb{R}^d$,
a positive number $T>0$, a new data point
$x\in\mathbb{R}^d\backslash D$, a function $\bar{f}=\left(f(x_1)\,
f(x_2)\,\ldots\,f(x_n)\right)^T$ to be extended  and an error
parameter $err\geq 0$.
\newline \textbf{Output:} An approximation $\bar{G} =
\left(G(x_1),G(x_2),\ldots,G(x_n)\right)^T$ of $f$ on $D$  and its
extension $G(x)$ to $x$.}\label{alg:ME}
\begin{algorithmic}[1]

\STATE Set the scale parameter $s=0$, $\bar{F}_{-1}=0$ and
$F_{-1}(x)=0$.

\WHILE{$\left\Vert \bar{f}-\bar{F}_{s-1}\right\Vert > err$}

\STATE Form the Gaussian kernel $K_s$ on $D$ (see
$\left(K_{\epsilon}\right)_{ij}=g_\epsilon\left(\left\Vert
x_i-x_j\right\Vert\right),\,i,j=1,\ldots,N$), with
$\epsilon_s=\frac{T}{2^s}$.

\STATE Estimate the numerical rank $l_s$ of $K_s$ using
$R_\delta\left(K_\epsilon\right)\leq \prod_{i=1}^d C(\vert
I_i\vert,\epsilon,\delta)$.

\STATE Apply Algorithm \ref{alg:rand_ID} to $K_s$ and $l_s$ to get
an $n\times l_s$ matrix $B_s$ and sampled dataset $D_s$.

\STATE Apply Algorithm \ref{alg: sse} to $B_s$ and $\bar{f}$. We get
the approximation $\bar{f}_s$ to $\bar{f}-\bar{F}_{s-1}$ at scale
$s$, and its extension $f_s(x)$ to $x$.

\STATE Set $\bar{F_s}=\bar{F}_{s-1}+\bar{f}_s, \,
F_s(x)=F_{s-1}(x)+f_s(x)$, $s=s+1$.

\ENDWHILE \STATE $\bar{G}=\bar{F}_{s-1}$ and $G(x)=F_{s-1}(x)$.
\end{algorithmic}
\end{algorithm}

\item[Laplacian Pyramid method:]
 The Laplacian pyramid is a multi-scale
algorithm for
 extending an empirical function $f$, which is
defined on a dataset $\Gamma$, to new data points. Mutual distances
between the data points in $\Gamma$ are used to approximate  $f$ in
different resolutions.

  $\Gamma$ is a set of $n$ data
points in $\mathbb{R}^{m}$ and  $f$ is  a function  defined on
$\Gamma$. A Gaussian kernel is defined on $\Gamma$ as $W_0
\triangleq w_0(x_i,x_j) =
e^{\frac{-\|x_{i}-x_{j}\|^{2}}{\sigma_0}}$.
 Normalizing
$W_0$ by $K_0 = k_0(x_i,x_j) = q_0^{-1}(x_i)w_0(x_i,x_j)$ where $
q_0(x_i) = \sum_j{w_0(x_i,x_j),}$
 yields a smoothing operator $K_0$. At a finer scale
$l$, the Gaussian kernel $ W_l = w_{l}(x_i,x_j) =
e^{{-\|(x_{i}-x_{j})\|^{2}}/({\frac{\sigma_0}{2^l}})}$ yields the
smoothing operator $ K_{l}= k_{l}(x_i,x_j) =
q_{l}^{-1}(x_i)w_{l}(x_i,x_j)$.

For any function $f: \Gamma \rightarrow \mathbb{R}$, the Laplacian
Pyramid representation of $f$ is defined iteratively as follows:
\begin{equation}\label{LPyramid_APP}\begin{split}
s_0(x_k) = \sum_{i=1}^{n}k_{0}(x_i,x_k)f(x_i)
\;\;\;\;\;\;\;\;\;\;\;\;\; \mbox{ for level} \;\;l=0
\\
s_l(x_k) = \sum_{i=1}^{n}k_{l}(x_i,x_k)d_{l}(x_i)
\;\;\;\;\;\;\;\;\;\;\;\;\;\;\;\;\;\;\;\; \mbox{ otherwise}.\\
\end {split}
\end{equation} The differences
\begin{equation}
\begin{split}
d_1 = f - s_0 \;\;\;\;\;\;\;\;\;\;\;\;\; \mbox{ for level} \;\;l=1
\\
d_l = f - \sum_{i=0}^{l-1} s_i \;\;\;\;\;\;\;\;\;\;\;\;\; \mbox{ for
level} \;\;l\ge 1
\end{split}
\end{equation}
are input for this algorithm at level $l$.

 Equation \eqref{LPyramid_APP} approximates a given function $f$
in a multi-scale manner, where $f \approx s_0 + s_1 + s_2 + \cdots$.
An admissible error should be set a-priori and the iterations in Eq.
\eqref{LPyramid_APP} stop when $\|f - \sum_{k} s_k \| < \mbox{err}.$

We extend $f$ to a new point $y \in \mathbb{R}^{m} \backslash
\Gamma$ in the following way:

\begin{equation}\label{LPyramid_EX}\begin{split}
s_0(y) = \sum_{i=1}^{n}k_{0}(x_i,y)f(x_i) \;\;\;\;\;\;\;\;\;\;\;\;\;
\mbox{ for level} \;\;l=0
\\ s_l(y) = \sum_{i=1}^{n}k_{l}(x_i,y)d_{l}(x_i)
\;\;\;\;\;\;\;\;\;\;\;\;\;\;\;\;\;\;\;  \mbox{ otherwise}.
\end {split}
\end{equation}

The extension of $f$ to the point $y$ is evaluated from  Eq.
\eqref{LPyramid_EX}  as $f(y) = \sum_{k}s_k(y)$.

\end{description}

 The performance results of the 4 methods are shown in
Fig.\ref{fig:oos_comperison}.

\begin{figure}[H]
    \includegraphics[width=15cm,keepaspectratio]{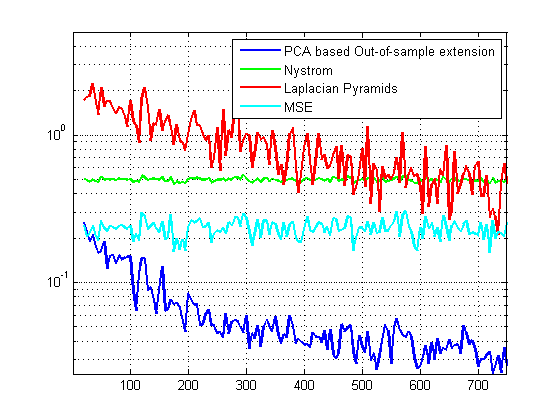}
    \caption{Performance comparison between different out-of-sample extension methods. The x-axis is the size of the training set and
     the y-axis is the norm of the error from  the out-of-sample extension  methods.}
    \label{fig:oos_comperison}
\end{figure}
\subsubsection{Anomaly detection on DARPA data}

In this experiment, all the 1321 available  data points are used as
our training dataset. We show in Fig. \ref{fig:darpa_orig_image} the
image of these data points after the embedding by $\psi$. The normal
behaved manifold in the embedded space in Fig.
\ref{fig:darpa_orig_image} has the ``horseshoe" shape. We can see a
few data points, which are classified as anomalous, are the labeled
attacks. Then, a set of newly arrived data points are assigned with
coordinates in the embedded space via the application of Nystr\"om
extension as can be seen in the left image in Fig.
\ref{fig:darpa_new_image}. It is also done by the applicaion of the
MSE  algorithm in \cite{bermanis:multiSampExtACHA}. Data point \#51,
which is a newly arrived data point, is an anomalous that can be
seen as an outlier on the left side of the normal (``horseshoe")
manifold.

We apply our out-of-sample extension algorithm using weights from
Eq. \ref{eq:tang weights},  to the same set of newly arrived data
points.  The results are shown on the right image in Fig.
\ref{fig:darpa_new_image}. To find anomalies, we compute the
Mahalanobis distance of the extension using Eq. \eqref{eq:maha
measure}  for each of the newly arrived data points. We see that
data point \#$51$ emerged as having a much higher residual error
($2.72 10^{-7}$) than the other data points whose average residual
error is $6.21 10^{-10}$. All the Mahalanobis distance values are
shown in Fig. \ref{fig:residuals}.

\begin{figure}[H]
\includegraphics[width=15cm,keepaspectratio]{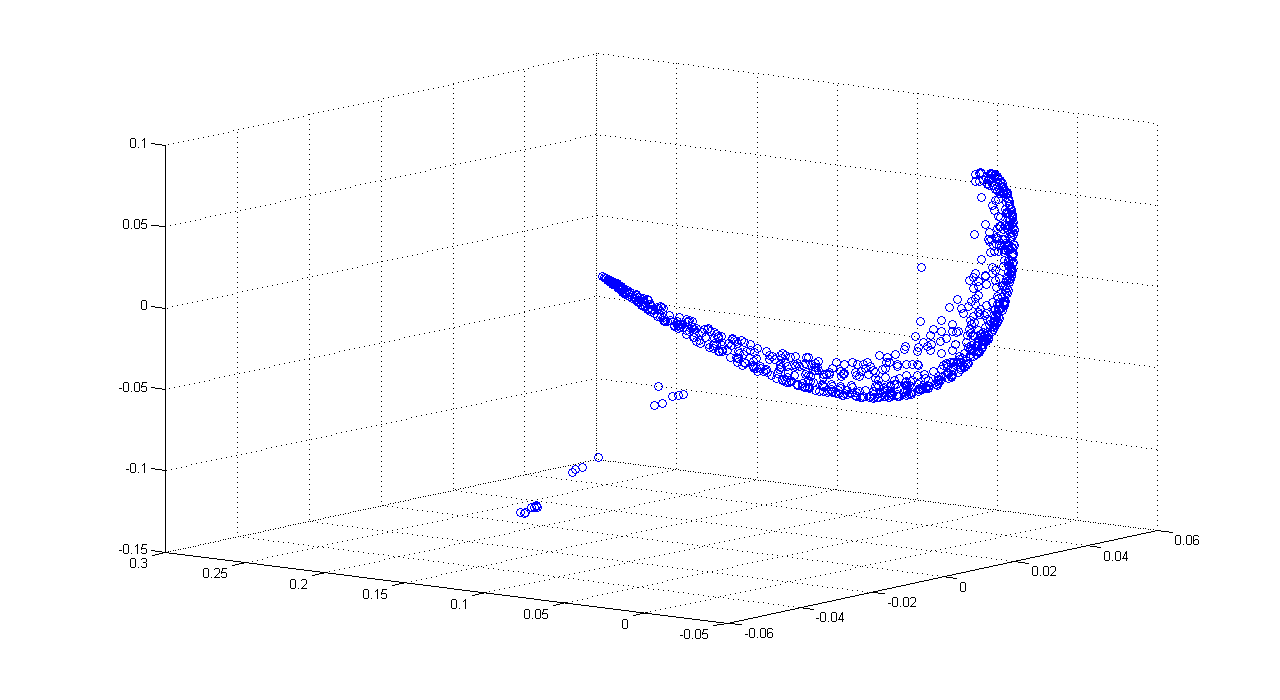}
\caption{The first three coordinates of the data points in
$\mathbb{R}^14$  after its embedding into $\mathbb{R}^5$. }
\label{fig:darpa_orig_image}
\end{figure}

\begin{figure}[H]
\includegraphics[width=15cm,keepaspectratio]{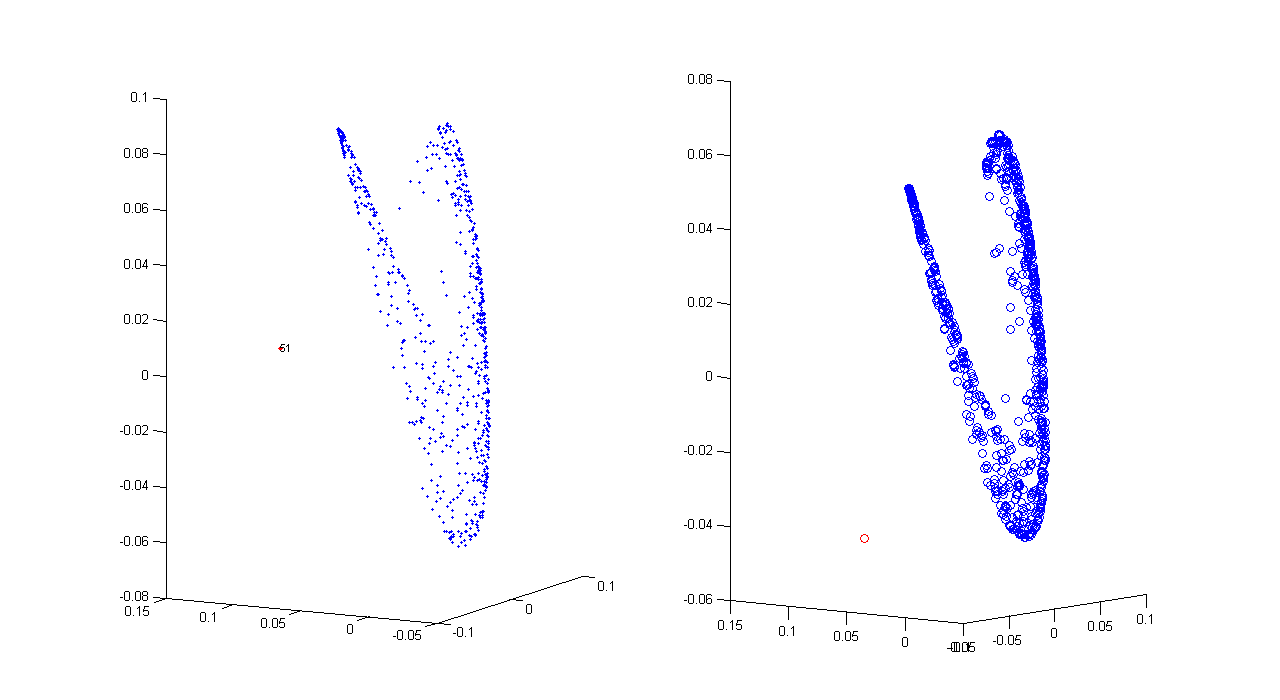}
\caption{Out-of-sample extension computed for a new day. In the left
side, the out-of-sample extension is computed via the application of
the Nystr\"om extension algorithm. Data point \#51 is known to be an
anomalous data point. In the right image, the output of the
algorithm, which uses  weights from Eq. \ref{eq:tang weights}, is
presented by the red data point which is the data point \# 51.}
\label{fig:darpa_new_image}
\end{figure}

\begin{figure}[H]
\includegraphics[width=15cm,keepaspectratio]{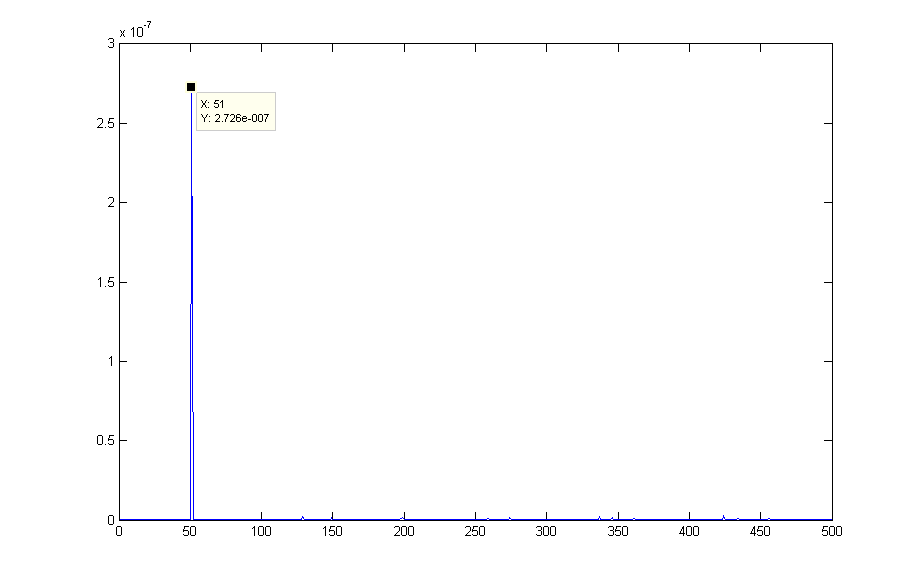}
\caption{The  Mahalanobis distance values. We see that data point
\#51 has the highest value. Therefore, it is classified as an
anomalous data point.} \label{fig:residuals}
\end{figure}


\section*{Conclusions}
In this paper, we present an efficient out-of-sample extension
(interpolation) scheme for dimensionality reduction maps that are
widely used in the field of data analysis. The computational cost of
such maps is high.  Therefore, once such a map is computed over a
training set, an efficient extension scheme is needed. The presented
scheme is based on the manifold assumption, which is widely used in
the field of dimensionality reduction. It provides an optimal
solution from a stochastic geometric-based linear equations system
that is determined by the application of  local PCA  of the embedded
data. Moreover, the scheme enables to detect abnormal data points.
The interpolation error was analyzed by assuming that the original
map is a Lipschitz function. The scheme was applied to both
synthetic and real-life data to provide good results by mapping data
from the manifold to the image manifold and by detection of abnormal
data points.

\section*{Acknowledgments} This research was partially supported by the
Israeli Ministry of Science \& Technology (Grants No. 3-9096,
3-10898), US-Israel Binational Science Foundation (BSF 2012282),
Blavatnik Computer Science Research Fund and ICRC Funds.

\bibliographystyle{plain}

\end{document}